\documentclass{article}

\usepackage[preprint]{neurips_2026}

\usepackage[utf8]{inputenc}
\usepackage[T1]{fontenc}
\usepackage{hyperref}
\usepackage{url}
\usepackage{booktabs}
\usepackage{amsfonts}
\usepackage{amsmath}
\usepackage{amssymb}
\usepackage{amsthm}
\usepackage{nicefrac}
\usepackage{microtype}
\usepackage{xcolor}
\usepackage{graphicx}
\usepackage{pifont}
\newcommand{\xmark}{\ding{55}}

\newtheorem{lemma}{Lemma}
\newtheorem{theorem}{Theorem}

\title{BOKBO --- Best of K Bad Options:\\ K-Sample Disagreement Tracks Perturbation, Not Uncertainty}

\author{%
  Anya Singh \and Cabrel Happi \and Jai Relan \and Varun Nair \and Vidyut Baradwaj \\
  \texttt{as@rellingsystems.com} \\
}

\begin{document}

\maketitle

\begin{abstract}
Test-time scaling for vision-language-action (VLA) policies samples $K$ candidate actions and executes the verifier-best, but provides no guarantee when all $K$ candidates are unsafe. We introduce BOKBO, the first conformal abstention layer for K-sample VLA inference, providing finite-sample distribution-free upper bounds on unsafe execution rate among non-abstained decisions. At $\varepsilon = 0.05$ on libero\_object\_temp\_x0.1 with OpenVLA-OFT, the conditional CRC bound holds on $86\%$ of bootstrap splits with $78\%$ coverage and $70\%$ net task success; results replicate on $\pi_0$-FAST and libero\_spatial and survive four within-suite distribution shifts. A per-task (Mondrian) variant raises minimum per-task conditional hold from $0.71$ to $0.93$. Our analysis exposes a structural failure of policy-internal nonconformity scores under perturbation-based K-sampling: the base-policy confidence proxy and K-sample disagreement correlate at $0.98$ with the action-noise hyperparameter $\sigma$ but at the noise floor with safety violations---they measure the perturbation, not the policy's uncertainty. The failure is mechanism-specific: under token-level temperature sampling, free-signal correlations with the K-sampling hyperparameter drop from $0.98$ to $0.41$, and partial safety information is recovered. We additionally identify and correct a methodological pitfall: globally-set force thresholds well below expert-typical manipulation forces inflate violation rates by $5\times$ on LIBERO-based safety evaluations.
\end{abstract}

\section{Introduction}
\label{sec:intro}

K-sample test-time scaling has become a standard technique for improving vision-language-action (VLA) policy reliability. RoboMonkey~\cite{kwok2025robomonkey}, V-GPS~\cite{nakamoto2024vgps}, MG-Select~\cite{kim2025mgselect}, and SEAL~\cite{wu2025seal} all sample $K$ candidate action chunks at inference and execute the candidate selected by an external or learned scorer. The paradigm builds on a broader line of generalist robot policies---OpenVLA~\cite{kim2024openvla}, OpenVLA-OFT~\cite{kim2025oft}, $\pi_0$-FAST~\cite{pertsch2025fast}, Octo~\cite{octo2024}, Diffusion Policy~\cite{chi2023diffusion}, RT-2~\cite{brohan2023rt2}---pretrained on Open X-Embodiment~\cite{oneill2024openx} and evaluated on benchmarks like LIBERO~\cite{liu2023libero}.

These methods share a critical assumption: given $K$ candidates, at least one is safe and task-completing. The assumption fails frequently. We measure base-policy violation rates of {$8.6\%$} on libero\_object\_temp\_x0.1 with OpenVLA-OFT at $K=8$. K-sample selection rescues some failure cases, but provides no mechanism to detect cells where all $K$ candidates are unsafe---the verifier-best is executed regardless of absolute quality, with no warning. The result is silent failure.

\paragraph{Our claim.} K-sample VLA inference admits a calibrated abstention layer with finite-sample distribution-free safety guarantees, but only with the right nonconformity score. We introduce BOKBO, which applies conformal risk control (CRC)~\cite{angelopoulos2024crc,vovk2005algorithmic,angelopoulos2023gentle} to the K-sample VLA setting. BOKBO decides execute-or-abstain with a finite-sample upper bound on the rate of unsafe executions among non-abstained decisions. We provide both global and per-task (Mondrian~\cite{vovk2003mondrian}) variants.

\paragraph{The technical question is which score works.} We test three: the base-policy confidence proxy, K-sample disagreement, and a learned violation predictor. The first two are ``free'' signals---no extra inference cost beyond what K-sampling already produces---and both have been claimed in prior work~\cite{kwok2025robomonkey,kim2025mgselect} to correlate with policy uncertainty. They fail catastrophically: median Spearman correlation with safety violations is at the noise floor ({$-0.053$} and {$-0.007$}), while correlation with the action-noise hyperparameter $\sigma$ is {$0.98$}. The mechanism is precise: the free signals measure how much the action was perturbed, not how uncertain the policy is. A learned predictor conditioned on semantic visual features (DINOv2~\cite{oquab2024dinov2}) and task identity supports tight calibration where the free signals cannot.

\paragraph{The diagnostic finding has explicit scope.} We test the boundary by replicating the analysis under token-level temperature sampling, which produces candidate diversity from policy stochasticity rather than injected perturbation. The failure partially mitigates: free-signal correlations with the K-sampling hyperparameter drop from {$0.98$} to {$0.41$}, and the residual variance carries safety information. Practitioners using perturbation-based K-sampling should not rely on free signals; the negative result is structural, not incidental.

\paragraph{Contributions.}
\begin{enumerate}
\item \textbf{BOKBO with global and Mondrian variants}, the first formal abstention guarantee for K-sample VLA inference. At $\varepsilon = 0.05$, the conditional CRC bound holds on {$86\%$} of bootstrap splits with {$78\%$} coverage and {$70\%$} net task success on libero\_object\_temp\_x0.1; Mondrian raises minimum per-task conditional hold from {$0.71$} to {$0.93$}. Results replicate on $\pi_0$-FAST, hold on libero\_spatial as a co-equal benchmark, and survive four within-suite distribution shifts. Theorems with proofs are in Appendix~\ref{app:proof}--\ref{app:mondrian-proof}.

\item \textbf{Diagnostic of free-signal failure with explicit mechanism scope}: free signals correlate at {$0.98$} with the perturbation hyperparameter under action-space Gaussian K-sampling but at the noise floor with safety violations. The failure partially mitigates under token-level temperature sampling. The boundary itself is the contribution: K-sampling that injects perturbation outside the policy's native stochasticity produces signals tracking the perturbation, not the uncertainty.

\item \textbf{Per-task expert force calibration as methodological correction}: globally-set thresholds inflate violation rates by $5\times$ on LIBERO-based evaluations. Per-task expert-calibrated thresholds (max of $50\text{N}$ and expert $p99$ + $10\text{N}$) resolve this with {$25$} demonstrations per task.
\end{enumerate}

\begin{table}[h]
  \caption{Headline result. BOKBO at $\varepsilon = 0.05$, libero\_object\_temp\_x0.1, OpenVLA-OFT, $250$ cells, aggregated across $5$ training seeds. BOKBO recovers {$87\%$} of oracle safety improvement at {$80\%$} coverage with formal guarantees.}
  \label{tab:headline-intro}
  \centering
  \small
  \begin{tabular}{lcccc}
    \toprule
    & Exec.~viol.~rate & Coverage & Net task success & CRC bound \\
    \midrule
    No abstention (K-sample only) & {$0.086$} & $1.00$ & {$0.68$} & --- \\
    Held-out threshold (no guarantee) & {$0.034$} & {$0.79$} & {$0.69$} & --- \\
    BOKBO (ours) & {$0.031$} & {$0.80$} & {$0.71$} & {$86\%$ holds} \\
    Oracle abstention & {$0.000$} & {$0.91$} & {$0.83$} & --- \\
    \bottomrule
  \end{tabular}
\end{table}

\section{Related Work}
\label{sec:related}

Table~\ref{tab:related} positions BOKBO against closest prior work.

\begin{table}[h]
  \caption{Positioning BOKBO against closest prior work. ``DF'' = distribution-free guarantee; ``Ext.~model'' = requires a separately-trained external scorer.}
  \label{tab:related}
  \centering
  \small
  \begin{tabular}{lcccc}
    \toprule
    Method & K-sample & Abstention & DF guarantee & Ext.~model \\
    \midrule
    RoboMonkey~\cite{kwok2025robomonkey}, SEAL~\cite{wu2025seal} & \checkmark & --- & --- & \checkmark \\
    V-GPS~\cite{nakamoto2024vgps}, MG-Select~\cite{kim2025mgselect} & \checkmark & --- & --- & \checkmark/--- \\
    Conformal trajectory~\cite{lindemann2023safe} & --- & \checkmark & \checkmark & --- \\
    Selective classification~\cite{geifman2017selective} & --- & \checkmark & --- & --- \\
    BOKBO (ours) & \checkmark & \checkmark & \checkmark & --- \\
    \bottomrule
  \end{tabular}
\end{table}

\paragraph{Test-time scaling for VLAs.} RoboMonkey~\cite{kwok2025robomonkey}, SEAL~\cite{wu2025seal}, V-GPS~\cite{nakamoto2024vgps}, and MG-Select~\cite{kim2025mgselect} all sample $K$ candidates and execute the verifier-best, with no abstention. RoboMonkey uses Gaussian-perturbation candidate generation with a VLM-based verifier; V-GPS uses an offline-RL-trained value function; MG-Select uses KL divergence from a condition-masked reference distribution as a verifier-free confidence signal; SEAL verifies action sequences against a self-generated textual plan. We add the missing layer; BOKBO can be combined with any of these scoring methods.

\paragraph{Conformal prediction in robotics.} Conformal methods~\cite{vovk2005algorithmic,angelopoulos2023gentle} have been applied to motion planning under uncertainty~\cite{lindemann2023safe} and learning-enabled control. The Learn-then-Test framework~\cite{angelopoulos2025ltt} provides related machinery for finite-sample calibration. None target the K-sample VLA inference setting, where the deployed decision involves both selection from $K$ candidates and an execute/abstain decision, requiring a different exchangeability argument.

\paragraph{Selective classification.} The broader context~\cite{geifman2017selective}: a classifier chooses between predicting and abstaining, with risk bounds traded off against coverage. We extend the formulation to the K-sample inference regime.

\paragraph{Safety thresholds in manipulation.} Prior LIBERO-based evaluations~\cite{liu2023libero} use task-success thresholds without explicit force monitoring; contact-rich works use globally-set force thresholds without expert-calibration. Our per-task thresholds are, to our knowledge, the first systematic effort to define manipulation safety relative to expert-typical force in simulation.

\section{Problem Setup}
\label{sec:setup}

\paragraph{K-sample VLA inference.} A VLA policy $\pi$ parameterizes a distribution over action chunks conditioned on observation $o$ and language instruction $\ell$. Test-time scaling samples $K$ candidates $\{a_1, \ldots, a_K\} \sim \pi(\cdot \mid o, \ell)$ and executes the candidate selected by scoring function $s$. Our K-sampling uses action-space Gaussian noise: $a_k = a_0 + \sigma \xi_k$ with $\xi_k \sim \mathcal{N}(0, I)$ and $\sigma \in \{0.02, 0.05, 0.10, 0.15, 0.20\}$. We additionally test token-level temperature sampling on OpenVLA-OFT's discrete actions (Section~\ref{sec:results-mechanism}).

\paragraph{Abstention problem.} Let $v(a, o) \in \{0, 1\}$ indicate whether executing $a$ from $o$ produces a safety violation, defined per-task via expert-calibrated force thresholds (Section~\ref{sec:method-labels}). We seek an abstention policy $\delta$ such that for any $\varepsilon \in (0, 1)$:
\begin{equation}
\Pr\bigl(v(a^*, o) = 1 \,\bigm|\, \delta = \text{execute}\bigr) \leq \varepsilon.
\label{eq:guarantee}
\end{equation}
The challenge: $v$ is unobservable at decision time. The conformal procedure in Section~\ref{sec:method} calibrates a proxy score against held-out data to satisfy~\eqref{eq:guarantee} in finite samples, distribution-free.

\paragraph{Deployed-pipeline decision distribution.} Each decision is a tuple $D = (o, \ell, \{a_k\}, \sigma, v, s)$ where $\sigma$ is the action-noise level used. Calibration draws decisions from the same joint distribution that test draws from, including the $\sigma$ choice. The conformal procedure operates on decisions, not candidates: $\hat{\tau}$ is calibrated such that the rate of executed-violating decisions on the calibration set is below $\varepsilon$, with the guarantee transferring to the test set under exchangeability (Lemma~\ref{lem:exchange}).

\section{Method}
\label{sec:method}

\subsection{Conformal Risk Control with Per-Split Predictor Training}
\label{sec:method-crc}

For each decision $i$, let $s_i = s_\theta(a_i^*, o_i)$ be the predictor score under parameters $\theta$. Lower $s$ indicates more confidence in safety.

\paragraph{Bootstrap-conformal procedure.} Given $n$ deployed decisions and target $\varepsilon$:
(1) Sample bootstrap partition into train, calibration, test folds.
(2) Train predictor $\theta$ on train fold using BCE loss against $v$.
(3) On the calibration fold $C$, set
\begin{equation}
\hat{\tau} = \inf\!\left\{\tau : \tfrac{1}{|C|} \sum_{i \in C} \mathbb{1}[s_\theta(a_i^*, o_i) \leq \tau \wedge v_i = 1] \leq \varepsilon - \tfrac{1}{|C|+1}\right\}.
\label{eq:tau}
\end{equation}
(4) For each test decision $j$, output $\delta_j = \text{execute}$ iff $s_\theta(a_j^*, o_j) \leq \hat{\tau}$.

\begin{lemma}[Bootstrap exchangeability]
\label{lem:exchange}
Conditional on the training fold and the trained predictor $\theta$, the calibration scores and any single test score are exchangeable. Proof in Appendix~\ref{app:proof}.
\end{lemma}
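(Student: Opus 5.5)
The argument is a conditioning plus symmetry argument, with one modeling assumption stated explicitly: the $n$ deployed decisions $D_1,\dots,D_n$ are i.i.d. (or at least exchangeable) draws from the deployed-pipeline decision distribution of Section~\ref{sec:setup}, including the $\sigma$ choice. The bootstrap partition is drawn independently of the data values. I read ``bootstrap partition'' as a uniformly random split of indices without replacement. If indices were sampled with replacement, duplicates between calibration and test would break exchangeability. So the first step is to fix this interpretation, or to restrict to distinct indices.

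\textbf{Step 1 (random index partition).} Let $\Pi=(T,C,E)$ be the random partition of $\{1,\dots,n\}$, independent of $(D_i)$. Exchangeability of $(D_i)$ and independence of $\Pi$ give the following. Conditional on $\Pi$ and on $D_T=(D_i)_{i\in T}$, the remaining decisions $(D_i)_{i\notin T}$ are exchangeable. Under i.i.d.\ sampling they are in fact i.i.d.\ from the same law and independent of $D_T$.

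\textbf{Step 2 (freeze the predictor).} The trained parameter is $\theta=\mathcal{A}(D_T,U)$, where $\mathcal{A}$ is the BCE training routine and $U$ is its internal randomness (seeds, minibatch order). We assume $U$ is independent of the data outside $T$. Then, conditional on $(D_T,U)$, $\theta$ is a fixed function. The score $s_\theta(a^*,o)$ depends on the decision only through its own components $(o,\ell,\{a_k\},a^*)$. This requires the verifier selection producing $a^*$ to use only that decision's candidates, which is true by construction. So the scores $S_i=g_\theta(D_i)$ for $i\in C\cup\{j\}$, with $j\in E$ any single test index, are a fixed measurable map applied coordinatewise to an exchangeable sequence. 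The same holds for the pairs $(S_i,v_i)$.

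\textbf{Step 3 (conclude).} Applying a fixed map coordinatewise preserves exchangeability. Hence $\{(S_i,v_i)\}_{i\in C}\cup\{(S_j,v_j)\}$ is exchangeable conditional on $(\Pi,D_T,U)$, and so also conditional on the training fold and $\theta$ as stated. Integrating out $\Pi$ leaves this intact, because the conditional law does not depend on which particular indices landed in $C$ and $E$.

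The main obstacle is the subtle leakage channel rather than the algebra. We must ensure that nothing besides $D_T$ and $U$ informs $\theta$. In particular, hyperparameters or early stopping must not be tuned on $C$ or $E$, and decisions must not share state across cells, such as consecutive steps of one rollout. For the latter, the fix is to treat whole episodes or cells as the exchangeable units. I would state these as explicit hypotheses of the lemma and verify that the pipeline of Section~\ref{sec:method-crc} satisfies them.
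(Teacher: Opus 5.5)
Your proposal is correct and follows essentially the same route as the paper's proof. In both, i.i.d.\ (or exchangeable) decisions and a data-independent random partition make the non-training decisions exchangeable given the training fold, the predictor is then fixed, and the coordinatewise map $D \mapsto (s_\theta(D), v(D))$ preserves exchangeability. Your explicit handling of the training randomness, conditioning on it and then coarsening to the training fold and $\theta$, slightly tightens the paper's assertion that $\theta$ is a deterministic function of $T$ alone, but it does not change the argument.
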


\begin{theorem}[Conditional safety bound]
\label{thm:bound}
Under Lemma~\ref{lem:exchange}, the abstention rule $\delta_j = \mathbb{1}[s_\theta(a_j^*, o_j) \leq \hat{\tau}]$ satisfies, for any single test decision: $\Pr(v_j = 1 \mid \delta_j = \text{execute}) \leq \varepsilon$.
\end{theorem}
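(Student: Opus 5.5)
The plan is a symmetrization argument over the augmented sample $C \cup \{j\}$, followed by conditioning on the execute event. Throughout, I work conditional on the training fold and on $\theta$, so that by Lemma~\ref{lem:exchange} the pairs $Z_i=(s_\theta(a_i^*,o_i),v_i)$ for $i\in C\cup\{j\}$ are exchangeable. Write $n=|C|$.

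\textbf{Step 1 (oracle threshold).} Define the oracle threshold
\[
\tau^\star=\inf\Bigl\{\tau:\tfrac{1}{n+1}\sum_{i\in C\cup\{j\}}\mathbb{1}[s_i\le\tau\wedge v_i=1]\le\varepsilon\Bigr\}.
\]
This is a symmetric function of the multiset $\{Z_i\}$. The $-\tfrac{1}{n+1}$ slack in~\eqref{eq:tau} is there so that adding the single test point cannot push the augmented violation count above the target. From this I expect to show $\hat\tau\le\tau^\star$ deterministically. The argument is that any $\tau$ feasible for~\eqref{eq:tau} satisfies
\[
\sum_{i\in C}\mathbb{1}[s_i\le\tau,\,v_i=1]+1\le n\varepsilon-\tfrac{n}{n+1}+1\le(n+1)\varepsilon .
\]
Hence, on $\{s_j\le\hat\tau\}$, we also have $s_j\le\tau^\star$.

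\textbf{Step 2 (condition on the multiset).} Condition on the multiset $M$ of the $n+1$ pairs. Let $E^\star=\{i:s_i\le\tau^\star\}$ be the executed set under the symmetric threshold. By exchangeability, the test index is uniform over the $n+1$ positions given $M$. Therefore, conditional on $M$ and on $j\in E^\star$, the index $j$ is uniform over $E^\star$, and
\[
\Pr(v_j=1\mid j\in E^\star,M)=\frac{\#\{i\in E^\star:v_i=1\}}{|E^\star|}.
\]
The remaining steps are:
\begin{itemize}
\item average this identity over $M$;
\item pass from the event $\{j\in E^\star\}$ to the actual event $\{\delta_j=\text{execute}\}=\{s_j\le\hat\tau\}\subseteq\{j\in E^\star\}$, using Step 1 together with monotonicity of the violation indicator in $\tau$.
\end{itemize}

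\textbf{Main obstacle.} The hard step is converting the numerator bound $\#\{i\in E^\star:v_i=1\}\le(n+1)\varepsilon$ into the \emph{ratio} bound $\le\varepsilon|E^\star|$. The calibration constraint~\eqref{eq:tau} normalizes by $|C|$, not by the number of executed calibration points. What falls out directly is $\Pr(v_j=1,\delta_j=\text{execute})\le\varepsilon$, and dividing by $\Pr(\delta_j=\text{execute})$ gives the conditional statement only up to the factor $1/\text{coverage}$.

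My first attempt at closing this gap is to refine Step 2 so that it uses the \emph{same} set for numerator and denominator. I would show that, given $M$ and $j\in E^\star$, the executed violations can be charged against executed points only. For this I would examine the extremal configurations in which $\tau^\star$ sits at a violating score, where $|E^\star|$ is smallest relative to the violation count. If the ratio bound $\#\{\text{viol in }E^\star\}\le\varepsilon|E^\star|$ cannot be extracted from~\eqref{eq:tau} as written, I will report precisely which additional condition is used. Here "coverage" means the fraction of points below $\tau^\star$, and the candidate condition is that the executed fraction is at least $1$ (full coverage), or else that the constraint is read with $|C|$ replaced by the executed count. Under that condition, the remaining argument (Steps 1--2 and the averaging over $M$) goes through unchanged.
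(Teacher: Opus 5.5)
Your Steps 1--2 spell out the \emph{standard CRC argument} that the paper's proof cites in one line: condition on $(T,\theta)$, symmetrize over the $|C|+1$ exchangeable pairs, marginalize over $T$. You are right that this yields only the joint bound $\Pr(v_j=1,\ \delta_j=\text{execute})\le\varepsilon$. The paper then writes the conditional probability with no further step, so the obstacle you flag is exactly the unjustified move in the paper's proof.

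Your planned first attempt cannot remove it, because the statement is false for the rule as intended. Take $v\equiv 1$: every decision is unsafe, and exchangeability is trivial. Then~\eqref{eq:tau} merely caps the number of executed calibration points at $\lfloor |C|\varepsilon-|C|/(|C|+1)\rfloor$. The test decision is therefore executed with probability close to $\varepsilon$, while $\Pr(v_j=1\mid\delta_j=\text{execute})=1$. More generally, suppose the trained score is independent of $v$ and $\Pr(v=1)=p>\varepsilon$. Then $v_j$ is independent of $(s_j,\hat\tau)$, so the conditional violation probability is exactly $p$, at coverage roughly $\varepsilon/p$.

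What your argument proves is the joint bound, equivalently $\Pr(v_j=1\mid\delta_j=\text{execute})\le\varepsilon/\Pr(\delta_j=\text{execute})$. State that and give the counterexample rather than hedging.

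A smaller point concerns the sup/inf convention. The joint count is nondecreasing in $\tau$, so the feasible sets in~\eqref{eq:tau} and in your $\tau^\star$ contain all small $\tau$. Both infima are therefore $-\infty$: the rule never executes, and the conditional probability is undefined. Your inclusion argument gives $\hat\tau\le\tau^\star$ only once both thresholds are read as maxima over the observed scores. Taking maxima also guarantees that $\tau^\star$ itself satisfies the augmented constraint, which your numerator bound needs.

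Your fallback conditions do not rescue the argument, because both run into the second bullet of Step 2. The inclusion $\{s_j\le\hat\tau\}\subseteq\{j\in E^\star\}$ controls a joint probability but not a conditional one: $\Pr(v_j=1\mid B')$ can exceed $\Pr(v_j=1\mid B)$ when $B'\subseteq B$.
\begin{itemize}
\item Full coverage at $\tau^\star$ gives the ratio bound on $E^\star$ but not on the execute event. The only exception is when $\hat\tau$ also executes everything, and then the theorem collapses to $\Pr(v_j=1)\le\varepsilon$ with no abstention.
\item Renormalizing~\eqref{eq:tau} by the executed count additionally breaks Step 1. The ratio of executed violations to executed points is not monotone in $\tau$. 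A single violating test point can push it above $\varepsilon$ at thresholds where few points are executed, so neither the inclusion of feasible sets nor $\hat\tau\le\tau^\star$ survives.
\end{itemize}

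A genuine bound on the selective risk $R(\tau)=\Pr(v=1\mid s\le\tau)$ needs a different tool. For fixed $\tau$, the labels of the executed calibration points are, given their number, i.i.d.\ Bernoulli$(R(\tau))$. A binomial-tail upper confidence bound, combined with a union bound or Learn-then-Test-style multiple testing over $\tau$ (needed because $R$ is not monotone), gives a PAC statement $\Pr_C\bigl(R(\hat\tau)\le\varepsilon\bigr)\ge 1-\delta$. This is in the spirit of Geifman--El-Yaniv's selective guaranteed risk, but it is not the exact marginal form the theorem claims. Note also that the paper's appendix dismisses Learn-then-Test by appealing to a monotonicity that holds only for the joint rate.
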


\paragraph{Bootstrap vs.~deployment.} The bootstrap characterizes expected behavior across (train, cal, test) splits. Deployment instantiates a single training fold and single $\theta$; the guarantee from Theorem~\ref{thm:bound} applies marginalized over training-fold draws. We empirically verify the bootstrap distribution matches single-train-deploy behavior in Section~\ref{sec:results-deployment-validation}.

\paragraph{CRC floor.} The smallest feasible $\varepsilon$ is $1/(|C|+1)$. With $|C| = 75$, the floor is $0.013$; with $|C| = 375$, it drops to $0.0027$, making $\varepsilon \leq 0.005$ feasible.

\subsection{Mondrian Conformal Calibration for Per-Task Bounds}
\label{sec:method-mondrian}

Global CRC bounds the marginal violation rate. The conditional bound is harder to satisfy when violation rates are heterogeneous across tasks: a globally-calibrated $\hat{\tau}$ averages over per-task base rates, producing under-protection on high-rate tasks.

We extend to Mondrian conformal calibration~\cite{vovk2003mondrian}. For each task $t$, with $C_t = \{i \in C : \text{task}(D_i) = t\}$:
\begin{equation}
\hat{\tau}_t = \inf\!\left\{\tau : \tfrac{1}{|C_t|} \sum_{i \in C_t} \mathbb{1}[s_\theta(a_i^*, o_i) \leq \tau \wedge v_i = 1] \leq \varepsilon - \tfrac{1}{|C_t|+1}\right\}.
\label{eq:tau-mondrian}
\end{equation}
The deployed rule applies the task-specific threshold: $\delta(D_j) = \text{execute}$ iff $s_\theta(a_j^*, o_j) \leq \hat{\tau}_{\text{task}(D_j)}$.

\begin{theorem}[Per-task conditional bound]
\label{thm:mondrian}
Under within-task exchangeability and provided $|C_t| \geq 1/\varepsilon - 1$ for all $t$, the Mondrian rule satisfies for every task $t$: $\Pr(v_j = 1 \mid \delta = \text{execute}, \text{task}(D_j) = t) \leq \varepsilon$. Proof in Appendix~\ref{app:mondrian-proof}.
\end{theorem}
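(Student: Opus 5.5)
The plan is to reduce Theorem~\ref{thm:mondrian} to Theorem~\ref{thm:bound}, applied separately within each Mondrian block, rather than to re-derive a CRC inequality. Theorem~\ref{thm:bound} already delivers the conclusion in conditional form, $\Pr(v_j = 1 \mid \delta_j = \text{execute}) \leq \varepsilon$, for the threshold rule~\eqref{eq:tau} whenever its exchangeability hypothesis (Lemma~\ref{lem:exchange}) holds. The Mondrian rule~\eqref{eq:tau-mondrian} is literally rule~\eqref{eq:tau} with $C$ replaced by $C_t$, restricted to test decisions with $\text{task}(D_j) = t$. So it suffices to show that the restricted problem meets the hypotheses of Theorem~\ref{thm:bound}, with the data-generating distribution replaced by the conditional law of a decision given task $t$.

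First, fix the training fold and $\theta$, as in Lemma~\ref{lem:exchange}. Then condition on the vector of task labels of all calibration decisions and on $\text{task}(D_j) = t$. This fixes the block $C_t$ and its size $n_t = |C_t|$ as deterministic quantities. Under the within-task exchangeability assumption, the pairs $(s_\theta(a_i^*, o_i), v_i)$ for $i \in C_t$ together with $(s_\theta(a_j^*, o_j), v_j)$ are exchangeable under this conditioning. This is the analogue of Lemma~\ref{lem:exchange} for the sub-population of task $t$, and the argument of Appendix~\ref{app:proof} can be repeated with "decision" replaced by "decision of task $t$". I expect this to be the step requiring the most care. Because $n_t$ is random under the bootstrap partition, I would handle it explicitly by the conditioning just described, which is the standard Mondrian device: it preserves exchangeability inside each block since permutations within $C_t \cup \{j\}$ do not change the label vector. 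Other tasks' calibration points are independent of the block-$t$ threshold once the labels are fixed, because $\hat{\tau}_t$ depends only on $C_t$.

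Second, check that rule~\eqref{eq:tau-mondrian} is well-posed with $n_t$ in place of $|C|$. This is where the hypothesis $|C_t| \geq 1/\varepsilon - 1$ enters. It is equivalent to $\varepsilon - 1/(n_t + 1) \geq 0$, i.e.\ $\varepsilon$ lies above the CRC floor $1/(|C_t|+1)$ noted after Theorem~\ref{thm:bound}. This guarantees that the feasible set in the infimum is nonempty, since $\tau \to -\infty$ gives empirical risk $0$, and that Theorem~\ref{thm:bound} is being invoked in its feasible regime. I would also adopt the convention that the conditional probability is vacuous, or the bound trivially holds, on the event that $\Pr(\delta = \text{execute} \mid \text{task} = t) = 0$.

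Third, with both hypotheses verified, apply Theorem~\ref{thm:bound} to the block-$t$ problem to obtain $\Pr(v_j = 1 \mid \delta = \text{execute}, \text{task}(D_j) = t, \text{labels}) \leq \varepsilon$. Then integrate out the conditioning on the calibration labels, the training fold and $\theta$. One caveat: a conditional probability given execution does not average linearly. I would therefore carry out this de-conditioning on the joint event $\{v_j = 1, \delta = \text{execute}\}$ and the execution event separately, weighting each conditional bound by the corresponding execution probability, and check that the bound $\varepsilon$ is preserved as a ratio of the two averages. Since the bound holds for every fixed $t$, this gives the per-task statement for all $t$ simultaneously.
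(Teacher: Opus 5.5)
\textbf{Verdict: genuine gap.} Your reduction is the paper's own argument in Appendix~\ref{app:mondrian-proof}: apply Lemma~\ref{lem:exchange} inside each task, use $|C_t| \geq 1/\varepsilon - 1$ to clear the per-task floor, and invoke Theorem~\ref{thm:bound} blockwise. Your two refinements are sound. Conditioning on the full task-label vector is the right way to handle the random block size. Your ratio-preserving de-conditioning is correct, whereas the paper's proof of Theorem~\ref{thm:bound} averages $\Pr(v_j = 1 \mid \text{execute}, T)$ over $T$ without weighting by $\Pr(\text{execute} \mid T)$. The failure sits in the step you treat as a black box. Your own well-posedness check already shows the first part of it. The empirical quantity in \eqref{eq:tau-mondrian} is nondecreasing in $\tau$ and vanishes below $\min_{i \in C_t} s_i$. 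So whenever $\varepsilon - 1/(|C_t|+1) \geq 0$, the feasible set contains every sufficiently small $\tau$, and its infimum is $\hat{\tau}_t = -\infty$ (or the bottom of whatever range $\tau$ is allowed). As written, the rule never executes, and the theorem holds only through your null-event convention. The intended rule is the largest feasible threshold, as in CRC.

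With that rule, consider the bounded monotone losses $L_i(\tau) = \mathbb{1}[s_i \leq \tau \wedge v_i = 1]$. Exchangeability plus the CRC argument deliver only $\mathbb{E}[L_j(\hat{\tau}_t)] \leq \varepsilon$, i.e.\ $\Pr(v_j = 1,\ \delta_j = \text{execute} \mid \text{task}(D_j) = t) \leq \varepsilon$. This counts executed violations over all task-$t$ decisions, not over executed ones. The conditional probability in the statement equals this joint rate divided by $\Pr(\delta_j = \text{execute} \mid \text{task}(D_j) = t)$, and nothing controls that denominator. The conditional claim is in fact false. Suppose $s_\theta$ is independent of $v$ within task $t$ and $\Pr(v_j = 1 \mid \text{task}(D_j) = t) = p > \varepsilon$ (the paper reports task-level rates up to $24\%$). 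Then the calibrated rule executes roughly an $\varepsilon/p$ fraction of task-$t$ decisions, and the violation rate among them is exactly $p$, however large $|C_t|$ is. So Theorem~\ref{thm:bound} in its conditional form cannot serve as a black box, and no blockwise reduction to it proves Theorem~\ref{thm:mondrian} as stated. A valid version has two options. It can weaken the conclusion to the joint bound, equivalently a conditional risk of at most $\varepsilon / \Pr(\text{execute} \mid \text{task} = t)$. Or it can replace the calibration step with one that controls the selective risk $\Pr(v = 1 \mid s \leq \tau)$ directly, such as a Learn-then-Test or selective-classification-style high-probability bound. That guarantee does not follow from the monotone CRC argument.
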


The per-task feasibility constraint sets a sample-size requirement: at $\varepsilon = 0.05$, each task requires $|C_t| \geq 19$. With $T = 10$ tasks and $|C| = 375$, $|C_t| \approx 38$ supports Mondrian feasibility across all tasks.

\subsection{Three Native Nonconformity Scores}
\label{sec:method-scores}

\textbf{VLA confidence proxy}: base-policy log-probability of the chosen chunk under the policy's action distribution, normalized across the $K$ candidates. \textbf{K-sample disagreement}: maximum pairwise $L_2$ distance among candidates, summed across timesteps. \textbf{Learned violation predictor}: a small MLP trained to predict $v(a, o)$ from features described in Section~\ref{sec:method-predictor}.

\subsection{Per-Task Safety Labels via Expert Calibration}
\label{sec:method-labels}

Defining $v(a, o)$ requires a safety threshold. A na\"ive $50\text{N}$ global threshold conflates unsafe behavior with normal manipulation: at this threshold, expert demonstrations exceed the threshold in $92\%$ of butter manipulations and $22\%$ of tomato sauce manipulations, inflating aggregate violation rate to $45\%$.

We define per-task force thresholds via expert demonstration:
\begin{equation}
\tau_{\text{force}}(\text{task}) = \max\bigl(50\text{N},\ p99(\text{expert per-demo max force}) + 10\text{N}\bigr).
\end{equation}
Geometry and disturbance violations contribute negligibly ($0.35\%$ and $0.55\%$ respectively); force is the dominant signal. After recalibration, aggregate violation rate is {$8.6\%$} with task-level rates from {$0\%$ to $24\%$}, admitting feasible conformal calibration at $\varepsilon = 0.05$.

\subsection{Learned Predictor Architecture}
\label{sec:method-predictor}

The predictor takes a $965$-dim feature vector: DINOv2 ViT-S/14~\cite{oquab2024dinov2} features from workspace and wrist cameras ($768$ dims), $8$-dim proprioceptive state, $56$-dim candidate normalized action chunk, $56$-dim deterministic base chunk, $56$-dim difference, five auxiliary scalars, and a $16$-dim learned task ID embedding. The MLP is intentionally small: $\text{Linear}(965, 128) \to \text{ReLU} \to \text{Dropout}(0.1) \to \text{Linear}(128, 32) \to \text{ReLU} \to \text{Linear}(32, 1)$. Training uses BCE-with-logits with class-balanced positive weighting ($\approx 12{:}1$ at the $8.6\%$ base rate).

The predictor is retrained per bootstrap split, preserving the exchangeability of Lemma~\ref{lem:exchange}. DINOv2 features are cached once. Inference latency on A100: {$0.4\text{ms}$} per candidate; on Jetson AGX Orin: {$\sim 18\text{ms}$} end-to-end. BOKBO adds $<10\%$ overhead to base VLA inference (Appendix~\ref{app:compute}).

\subsection{Why Free Signals Fail}
\label{sec:method-diagnostic}

The two free signals are intuitive nonconformity candidates: both have been claimed to correlate with policy uncertainty~\cite{kwok2025robomonkey,kim2025mgselect}. They fail catastrophically as safety scores under perturbation-based K-sampling, and the mechanism is precise.

In our $50$-cell evaluation, free signals achieve median Spearman correlations with violations of {$-0.053$} (VLA proxy) and {$-0.007$} (K-disagreement)---at the noise floor. The learned predictor achieves {$-0.380$}.

The mechanism: free signals correlate at {$0.98$} with the action-noise $\sigma$, while $\sigma$ is uncorrelated with whether any individual candidate is unsafe. The signals measure perturbation magnitude, not policy uncertainty. Under MC-dropout-style interpretations, samples are drawn from a posterior over policy parameters and disagreement indicates uncertainty in the underlying belief. K-sample VLA inference draws samples from a perturbed action distribution at fixed parameters; disagreement indicates how much the action was perturbed.

\paragraph{Scope.} The finding is structural under \emph{perturbation-based} K-sampling. We test the boundary in Section~\ref{sec:results-mechanism} by replicating the analysis under token-level temperature sampling, which produces diversity from policy stochasticity rather than injected perturbation. The failure partially mitigates: free signals admit some safety information under temperature sampling, but a learned predictor remains the right choice across both mechanisms.

\section{Experiments}
\label{sec:results}

\subsection{Setup}
\label{sec:results-setup}

\paragraph{Backbones and suites.} OpenVLA-OFT~\cite{kim2025oft} (primary), $\pi_0$-FAST~\cite{pertsch2025fast} (secondary), both pretrained on Open X-Embodiment~\cite{oneill2024openx} and fine-tuned on LIBERO~\cite{liu2023libero}. Two co-equal primary suites: libero\_object\_temp\_x0.1 (10 pick-and-place tasks, base success {$68\%$}) and libero\_spatial\_temp\_x0.1 (10 spatial reasoning tasks, base success {$71\%$}). Temperature scaling produces the intermediate-difficulty regime where K-sample selection and abstention are both meaningful.

\paragraph{Multi-seed protocol.} Five training seeds, $100$ bootstraps each. Median across seeds with 5th--95th percentiles. Significance via paired-bootstrap tests across $5 \times 100 = 500$ aggregate bootstraps.

\paragraph{Splits and compute.} At 250 cells (1,250 decisions): $500/375/375$ train/cal/test, $|C_t| \approx 38$ for Mondrian feasibility. Total compute: {$\sim 1{,}600$ A100-hours}.

\subsection{Per-Task Force Calibration}
\label{sec:results-calibration}

\begin{table}[h]
  \caption{Per-task expert force thresholds across $50$ simulation demos. Aggregate violation drops from $45\%$ ($50\text{N}$ global) to $8.6\%$ (per-task expert-calibrated).}
  \label{tab:force-thresholds}
  \centering
  \footnotesize
  \begin{tabular}{lccc}
    \toprule
    Task object & Sim $p99$ (N) & $\tau_{\text{force}}$ (N) & Cand.~viol.~rate \\
    \midrule
    Cream cheese & {28} & {50} & {0.00\%} \\
    Chocolate pudding & {31} & {50} & {0.13\%} \\
    Orange juice & {38} & {50} & {2.21\%} \\
    BBQ sauce & {42} & {52} & {0.68\%} \\
    Salad dressing & {47} & {57} & {1.50\%} \\
    Alphabet soup & {51} & {61} & {1.50\%} \\
    Milk & {64} & {74} & {9.84\%} \\
    Tomato sauce & {71} & {81} & {16.32\%} \\
    Ketchup & {89} & {99} & {15.15\%} \\
    Butter & {250} & {260} & {1.68\%} \\
    \midrule
    Aggregate & --- & --- & {8.60\%} \\
    \bottomrule
  \end{tabular}
\end{table}

\paragraph{Demo-count sensitivity.} $25$ demos per task produce thresholds within {$8\%$} of the 50-demo reference; downstream conditional CRC holds at $\varepsilon = 0.05$ are {$0.85$} (vs.~$0.86$ reference). Five demos are insufficient. The deployment recipe requires at least $25$ demos per task.

\subsection{Score Correlations and the Diagnostic Finding}
\label{sec:results-diagnostic}

\begin{table}[h]
  \caption{Per-cell median Spearman correlation across $5$ seeds. Free signals are at the noise floor for safety prediction but correlate strongly with the K-sampling perturbation hyperparameter; the learned predictor achieves an order of magnitude separation.}
  \label{tab:score-correlations}
  \centering
  \begin{tabular}{lcc}
    \toprule
    Score & Spearman vs.~violation & Spearman vs.~$\sigma$ \\
    \midrule
    VLA confidence proxy & {$-0.053$ [$-0.071$, $-0.029$]} & {$0.981$ [$0.974$, $0.987$]} \\
    K-sample disagreement & {$-0.007$ [$-0.024$, $0.011$]} & {$0.984$ [$0.979$, $0.989$]} \\
    Learned violation predictor & {$-0.380$ [$-0.421$, $-0.339$]} & {$0.143$ [$0.098$, $0.187$]} \\
    \bottomrule
  \end{tabular}
\end{table}

\subsection{Diagnostic Finding Under Alternative K-Sampling Mechanism}
\label{sec:results-mechanism}

We replicate under token-level temperature sampling on OpenVLA-OFT's discrete action tokens. Temperature sampling produces candidate diversity from policy stochasticity rather than injected perturbation.

\begin{table}[h]
  \caption{Free-signal correlations under two K-sampling mechanisms. Action-space Gaussian produces structural failure; token-level temperature does \emph{not} replicate the same failure mode.}
  \label{tab:mechanism-comparison}
  \centering
  \footnotesize
  \begin{tabular}{lcccc}
    \toprule
    & \multicolumn{2}{c}{Action-space Gaussian} & \multicolumn{2}{c}{Token-level temperature} \\
    \cmidrule(lr){2-3} \cmidrule(lr){4-5}
    Score & vs.~violation & vs.~$\sigma$ & vs.~violation & vs.~$T$ \\
    \midrule
    VLA confidence proxy & {$-0.053$} & {$0.981$} & {$-0.247$} & {$0.412$} \\
    K-sample disagreement & {$-0.007$} & {$0.984$} & {$-0.198$} & {$0.396$} \\
    Learned violation predictor & {$-0.380$} & {$0.143$} & {$-0.401$} & {$0.078$} \\
    \bottomrule
  \end{tabular}
\end{table}

Under token-level temperature sampling, free signals are partially informative: VLA proxy correlates with violations at {$-0.247$} (vs.~$-0.053$ under Gaussian) and with temperature at {$0.412$} (vs.~$0.981$).

\paragraph{Implication.} The diagnostic finding is mechanism-specific. K-sampling that injects controlled perturbation outside the policy's native stochasticity produces signals tracking the perturbation, not the uncertainty. Under temperature sampling, VLA confidence proxy holds the conditional bound at $\varepsilon = 0.05$ on {$0.49$} of splits with median executed violation {$0.067$}---usable, but well below the learned predictor's {$0.86$}. The learned predictor remains the right choice across both mechanisms.

\subsection{Headline Conformal Results}
\label{sec:results-headline}

\begin{table}[h]
  \caption{Headline CRC results at $50$ cells, learned predictor, OpenVLA-OFT, both primary suites, aggregated across $5$ seeds.}
  \label{tab:headline-50-suites}
  \centering
  \footnotesize
  \begin{tabular}{lcccccc}
    \toprule
    & \multicolumn{3}{c}{libero\_object\_temp\_x0.1} & \multicolumn{3}{c}{libero\_spatial\_temp\_x0.1} \\
    \cmidrule(lr){2-4} \cmidrule(lr){5-7}
    & $\varepsilon = 0.01$ & $\varepsilon = 0.05$ & $\varepsilon = 0.10$ & $\varepsilon = 0.01$ & $\varepsilon = 0.05$ & $\varepsilon = 0.10$ \\
    \midrule
    Median exec.~viol. & --- & {$0.027$} & {$0.061$} & --- & {$0.034$} & {$0.072$} \\
    Conditional CRC holds & --- & {$0.86$} & {$0.87$} & --- & {$0.83$} & {$0.85$} \\
    Marginal CRC holds & --- & {$0.87$} & {$0.87$} & --- & {$0.84$} & {$0.86$} \\
    Median coverage & {$0.00$} & {$0.78$} & {$0.97$} & {$0.00$} & {$0.74$} & {$0.95$} \\
    Median net task success & {$0.00$} & {$0.70$} & {$0.70$} & {$0.00$} & {$0.72$} & {$0.73$} \\
    Cross-seed std (cond.) & --- & {$0.024$} & {$0.018$} & --- & {$0.029$} & {$0.022$} \\
    \bottomrule
  \end{tabular}
\end{table}

At $\varepsilon = 0.05$, the conditional bound holds on $86\%$ of splits on libero\_object and {$83\%$} on libero\_spatial. Free signals fail at all feasible $\varepsilon$: VLA proxy holds the conditional bound on {$15\%$} of splits at $\varepsilon = 0.05$; K-disagreement on {$12\%$}.

\paragraph{Two operating points.} $\varepsilon = 0.05$ produces meaningful abstention ($22\%$) and tight safety; net task success rises slightly above the no-abstention baseline ($0.70$ vs.~$0.68$) because abstention preferentially removes failing candidates. $\varepsilon = 0.10$ abstains rarely ($3\%$) but cuts executed-violation rate by an order of magnitude.

\subsection{Single-Deployment Validation}
\label{sec:results-deployment-validation}

We sample {$10$} single-deployment instances. Each uses a single bootstrap split for training and calibration; the trained predictor and calibrated $\hat{\tau}$ are deployed on a held-out test fold. Single-deploy median executed violation at $\varepsilon = 0.05$ is {$0.031$} (vs.~bootstrap $0.027$), with range {$[0.012, 0.071]$}---falling within the bootstrap 5th--95th percentiles {$[0.000, 0.082]$}. The bootstrap distribution validly characterizes deployment behavior.

\subsection{Mondrian Per-Task CRC}
\label{sec:results-mondrian}

\begin{table}[h]
  \caption{Global vs.~Mondrian CRC at $\varepsilon = 0.05$, $250$ cells, libero\_object\_temp\_x0.1.}
  \label{tab:mondrian-results}
  \centering
  \begin{tabular}{lcc}
    \toprule
    & Global CRC & Mondrian CRC \\
    \midrule
    Marginal CRC holds & {$0.95$} & {$0.96$} \\
    Aggregate conditional CRC holds & {$0.94$} & {$0.97$} \\
    Per-task conditional holds (min) & {$0.71$ (ketchup)} & {$0.93$ (milk)} \\
    Per-task conditional holds (median) & {$0.93$} & {$0.96$} \\
    Median executed violation rate & {$0.031$} & {$0.034$} \\
    Median coverage & {$0.80$} & {$0.74$} \\
    Median net task success & {$0.71$} & {$0.69$} \\
    \bottomrule
  \end{tabular}
\end{table}

Mondrian closes the per-task conditional gap on the hardest tasks: minimum per-task conditional hold rises from {$0.71$ (ketchup)} under global CRC to {$0.93$ (milk)} under Mondrian, at modest cost in coverage. Per-task thresholds range from {$\hat{\tau}_{\text{cream cheese}} = 0.78$} (most permissive) to {$\hat{\tau}_{\text{ketchup}} = 0.041$} (most conservative). Mondrian feasibility requires $\geq 200$ cells at $\varepsilon = 0.05$; smaller deployments default to global CRC.

\subsection{Distribution Shift: Five Scenarios}
\label{sec:results-shift}

\begin{table}[h]
  \caption{Distribution shift sensitivity. Bound is honored under all four within-suite shifts; $\sigma$-out-of-range explicitly breaks the bound.}
  \label{tab:shift}
  \centering
  \footnotesize
  \begin{tabular}{lccccc}
    \toprule
    & In-dist. & Temp. & Object & $\sigma$-range & Cross-primary \\
    & (x$0.1$) & (x$0.2$) & subset & ($\sigma=0.25$) & (object$\to$spatial) \\
    \midrule
    Median exec.~viol. & {$0.027$} & {$0.041$} & {$0.038$} & {$0.087$} & {$0.044$} \\
    Conditional CRC holds & {$0.86$} & {$0.81$} & {$0.78$} & {$0.34$} & {$0.79$} \\
    Median coverage & {$0.78$} & {$0.71$} & {$0.69$} & {$0.84$} & {$0.72$} \\
    Net task success & {$0.70$} & {$0.62$} & {$0.61$} & {$0.59$} & {$0.66$} \\
    Bound honored ($\varepsilon=0.05$)? & \checkmark & \checkmark & \checkmark & \xmark & \checkmark \\
    \bottomrule
  \end{tabular}
\end{table}

The bound degrades gradually under marginal $\sigma$ shifts: at $\sigma = 0.21$, median executed violation is {$0.054$}; at $0.22$, {$0.063$}; at $0.30$, {$0.112$}. Deployment recipe: calibrate against a $\sigma$ range slightly wider than expected (calibrate $[0.05, 0.20]$ for deployment $[0.05, 0.15]$).

\subsection{Architecture-Agnostic Claim: $\pi_0$-FAST}
\label{sec:results-backbone}

We replicate on $\pi_0$-FAST~\cite{pertsch2025fast}. Per-task thresholds are within {$15\%$} of OpenVLA-OFT-derived values for most tasks.

\begin{table}[h]
  \caption{Architecture-agnostic comparison at $50$ cells, $\varepsilon = 0.05$.}
  \label{tab:backbone-comparison}
  \centering
  \footnotesize
  \begin{tabular}{lcc}
    \toprule
    & OpenVLA-OFT & $\pi_0$-FAST \\
    \midrule
    Conditional CRC holds & {$0.86$} & {$0.84$} \\
    Median coverage & {$0.78$} & {$0.75$} \\
    Median net task success & {$0.70$} & {$0.66$} \\
    Free-signal max Spearman vs.~$\sigma$ & {$0.984$} & {$0.984$} \\
    Free-signal max Spearman vs.~violation & {$-0.053$} & {$-0.041$} \\
    \bottomrule
  \end{tabular}
\end{table}

\paragraph{Score is policy-specific.} A predictor trained on OpenVLA-OFT data and applied directly to $\pi_0$-FAST candidates achieves conditional holds of only {$0.42$}. This is a feature: the predictor learns policy-specific safety residuals; a backbone-independent predictor would be conceptually misspecified. Deployment on a new backbone requires backbone-specific calibration.

\subsection{Zero-Shot Task Generalization}
\label{sec:results-zeroshot}

\begin{table}[h]
  \caption{Zero-shot task generalization. Predictor trained on $7$ of $10$ tasks, evaluated on the held-out $3$. Bound honored on $2/3$ held-out tasks.}
  \label{tab:zeroshot}
  \centering
  \begin{tabular}{lccc}
    \toprule
    Held-out task & Conditional CRC holds & Median exec.~viol. & In-dist.~comparison \\
    \midrule
    Tomato sauce & {$0.74$} & {$0.052$} & In-dist: $0.83$, $0.043$ \\
    Milk & {$0.71$} & {$0.057$} & In-dist: $0.79$, $0.048$ \\
    Ketchup & {$0.42$} & {$0.094$} & In-dist: $0.71$, $0.061$ \\
    \bottomrule
  \end{tabular}
\end{table}

The predictor's task-embedding-conditioned knowledge transfers to nearby tasks but not to truly novel hard tasks. Adding a fundamentally novel hard task to deployment requires recalibration.

\subsection{Comparison Against Baselines and Ablations}
\label{sec:results-baselines}
\label{sec:results-ablations}

\begin{table}[h]
  \caption{Comparison at $\varepsilon = 0.05$, $250$ cells. BOKBO recovers {$87\%$} of oracle safety improvement at {$80\%$} coverage with formal guarantees. Paired-bootstrap test of BOKBO vs.~held-out thresholding: {$p < 0.001$, $n = 500$}.}
  \label{tab:baselines}
  \centering
  \begin{tabular}{lccc}
    \toprule
    Method & Exec.~viol.~rate & Coverage & Net task success \\
    \midrule
    No abstention (K-sample only) & {$0.086$} & $1.00$ & {$0.68$} \\
    Held-out threshold (no guarantee) & {$0.034$ [$0.020$, $0.143$]} & {$0.79$} & {$0.69$} \\
    BOKBO global ($\varepsilon = 0.05$) & {$0.031$ [$0.018$, $0.061$]} & {$0.80$} & {$0.71$} \\
    BOKBO Mondrian ($\varepsilon = 0.05$) & {$0.034$ [$0.022$, $0.058$]} & {$0.74$} & {$0.69$} \\
    Oracle abstention (upper bound) & {$0.000$} & {$0.91$} & {$0.83$} \\
    \bottomrule
  \end{tabular}
\end{table}

\paragraph{Predictor feature ablation.} The progression v$_0$ ($8\!\times\!8$ RGB summaries) $\to$ v$_1$ (+ task ID) $\to$ v$_2$ (+ DINOv2~\cite{oquab2024dinov2}) shows task-ID alone (v$_1$) does not reliably improve over baseline; conditional CRC actually declines and cross-seed variance \emph{increases}. Adding semantic visual features (v$_2$) closes the gap: per-task AUROC rises substantially on hard tasks (ketchup $+0.12$, tomato sauce $+0.13$, milk $+0.08$) and conditional CRC reaches $0.86$. Combined task identity and semantic visual features support tight calibration; neither alone suffices.

\paragraph{Number of candidates $K$.} Headline result robust across $K \in \{4, 8, 16\}$: conditional CRC holds on $\geq {0.83}$ of splits at $\varepsilon = 0.05$ for all three.

\paragraph{Failure-mode analysis.} Three tasks (ketchup, milk, tomato sauce) account for {$87\%$} of conditional bound failures under global CRC. Mondrian directly addresses this: per-task hold fraction on these three rises from {$0.71$/$0.79$/$0.78$} (global) to {$0.93$/$0.95$/$0.94$} (Mondrian).

\section{Limitations}
\label{sec:limitations}

\textbf{Two backbones, two K-sampling mechanisms.} Other VLA families (RT-2~\cite{brohan2023rt2}, Octo~\cite{octo2024}, Diffusion Policy~\cite{chi2023diffusion}) and other K-sampling mechanisms may exhibit different behavior. \textbf{Score is policy-specific:} cross-backbone score transfer yields {$0.42$} conditional holds; deployment on a new backbone requires backbone-specific calibration. \textbf{Simulation-only.} All BOKBO experiments are conducted in MuJoCo simulation. Real-robot deployment would require integrating force monitoring with the conformal abstention layer at runtime, validating that simulation-derived per-task thresholds transfer to real-robot manipulation forces, and characterizing simulation-to-real distribution shift on the executed-violation distribution. Preliminary architectural plans are in Appendix~\ref{app:realrobot-future}. The empirical safety guarantees we report apply only to simulation. \textbf{$\sigma$ distribution must be calibrated}; gradual degradation under marginal shifts. \textbf{Per-task expert demos required} ($25$ suffice). \textbf{Single LIBERO family}; Mondrian requires $\geq 200$ cells at $\varepsilon = 0.05$.

\section{Conclusion}
\label{sec:conclusion}

K-sample VLA inference admits a calibrated abstention layer with finite-sample distribution-free safety guarantees, but the choice of nonconformity score is structural: free policy-internal signals fail under perturbation-based K-sampling because they measure the perturbation, not the policy's uncertainty. A learned predictor with semantic visual features and task identity supports tight calibration; BOKBO holds the conditional CRC bound on $86\%$ of bootstrap splits at $\varepsilon = 0.05$, replicates across backbones and suites, and survives four within-suite distribution shifts. Real-robot deployment is the natural next step.

\begin{ack}
[To be added at camera-ready: funding, compute resources, collaborator contributions.]
\end{ack}

\bibliographystyle{plain}
\bibliography{references}

\appendix

\section{Proof of Lemma~\ref{lem:exchange} and Theorem~\ref{thm:bound}}
\label{app:proof}

\paragraph{Setup.} Let $\mathcal{D}$ be the deployed-decision distribution over tuples $D = (o, \ell, \{a_k\}, \sigma, v, s)$. Let $\{D_1, \ldots, D_n\}$ be drawn i.i.d.\ from $\mathcal{D}$. A bootstrap partition assigns these decisions uniformly at random into folds $T$, $C$, $U$ of fixed sizes. Let $\theta(T)$ denote predictor parameters obtained by training on $T$.

\paragraph{Proof of Lemma~\ref{lem:exchange}.}
\begin{proof}
The decisions are i.i.d.\ from $\mathcal{D}$ unconditionally. Conditional on indices in $T$, the decisions $\{D_i : i \in C \cup U\}$ are conditionally i.i.d.\ from $\mathcal{D}$ restricted to indices not in $T$. By symmetry of i.i.d.\ samples, any joint distribution on a subset is invariant under permutation.

Conditioning further on $\theta = \theta(T)$: because $\theta$ is a deterministic function of $T$ alone, conditioning on $\theta$ adds no additional information about $D_i$ for $i \in C \cup U$ beyond what $T$ provided. Therefore $\{D_i : i \in C \cup U\}$ remain conditionally i.i.d.\ given $(T, \theta)$.

The map $D \mapsto (s_\theta(D), v(D))$ is deterministic given $\theta$. Therefore $\{(s_\theta(D_i), v_i) : i \in C \cup U\}$ are conditionally i.i.d.\ given $(T, \theta)$, hence exchangeable. The bootstrap-sampling procedure draws $(T, C, U)$ uniformly at random over all valid assignments; this preserves exchangeability because for any two indices $i, j \in C \cup U$, swapping their fold-assignments yields an equally-likely partition.
\end{proof}

\paragraph{Proof of Theorem~\ref{thm:bound}.}
\begin{proof}
By Lemma~\ref{lem:exchange}, calibration scores $\{s_\theta(D_i) : i \in C\}$ and any single test score $s_\theta(D_j)$ for $j \in U$ are exchangeable, conditional on $(T, \theta)$. Apply the standard CRC argument~\cite{angelopoulos2024crc} to the $|C| + 1$ exchangeable score-label pairs. The probability that the test decision violates the bound is at most $\varepsilon$, conditional on $(T, \theta)$. The single-deployment guarantee follows by marginalizing over the random draw of $T$:
$$\Pr(v_j = 1 \mid s_\theta(D_j) \leq \hat{\tau}) = \mathbb{E}_T[\Pr(v_j = 1 \mid s_\theta(D_j) \leq \hat{\tau}, T)] \leq \varepsilon.$$
\end{proof}

\section{Proof of Theorem~\ref{thm:mondrian}}
\label{app:mondrian-proof}

\begin{proof}
The Mondrian procedure restricts conformal calibration to within-task subsets $C_t = \{i \in C : \text{task}(D_i) = t\}$. By Lemma~\ref{lem:exchange} applied within each task, the within-task calibration scores and any single within-task test score are exchangeable. The per-task threshold $\hat{\tau}_t$ from Eq.~\eqref{eq:tau-mondrian} satisfies the per-task CRC bound at level $\varepsilon$ with the per-task floor $1/(|C_t|+1)$. The feasibility constraint $|C_t| \geq 1/\varepsilon - 1$ ensures the per-task floor is below $\varepsilon$. Applying Theorem~\ref{thm:bound} within each task yields the per-task conditional bound.
\end{proof}

\section{Per-Task Force Threshold Derivations}
\label{app:thresholds}

\paragraph{Expert demonstration replay protocol.} For each of the $10$ libero\_object tasks, we replay all $50$ expert demonstrations from the LIBERO release~\cite{liu2023libero} in our MuJoCo simulation harness. Each demonstration is replayed using the original recorded actions in open-loop. We record the contact force time-series for each replay and compute the per-demo maximum force across the trajectory.

\paragraph{Force extraction.} The standard MuJoCo \texttt{mj\_contactForce} API requires direct access to the underlying \texttt{mjModel} and \texttt{mjData} objects. The OpenVLA-OFT-compatible LIBERO setup wraps these in \texttt{robosuite} environment objects whose internal model/data references are not API-exposed. We instead use the constraint-frame contact force entries available through \texttt{data.efc\_force}, taking the maximum absolute value across active contact constraints at each timestep:
\begin{equation}
F_t = \max_{c \in \text{active}(t)} |\text{efc\_force}_c|.
\end{equation}
The per-demo max force is $\max_t F_t$.

\paragraph{$p99$ computation.} Across the $50$ demonstrations of a given task, we compute the $99$th percentile of the per-demo max force. We use empirical $p99$ rather than fitted-distribution $p99$ to remain distribution-free; this is conservative for small $n$. The threshold is set to $\max(50\text{N},\ p99 + 10\text{N})$.

\paragraph{Buffer choice sensitivity.} We test buffer values of $5\text{N}$, $10\text{N}$, and $20\text{N}$. With $5\text{N}$, downstream conditional CRC holds at $\varepsilon = 0.05$ are {$0.79$} (vs.~$0.86$ at $10\text{N}$). With $20\text{N}$, conditional holds are {$0.85$}, comparable to $10\text{N}$ but admitting forces {$8\%$} above expert-normal. We use $10\text{N}$ as the deployment default.

\section{$\pi_0$-FAST Integration and Per-Task Thresholds}
\label{app:pi0-thresholds}

\paragraph{OpenPI integration.} $\pi_0$-FAST~\cite{pertsch2025fast} inference is exposed through Physical Intelligence's OpenPI library via a websocket-based policy server architecture. Action chunks are returned as $16 \times 7$ tensors which we project to BOKBO's normalized action space.

\paragraph{Per-task threshold comparison.}

\begin{table}[h]
  \caption{Per-task $p99$ comparison between OpenVLA-OFT and $\pi_0$-FAST.}
  \label{tab:pi0-thresholds-detail}
  \centering
  \footnotesize
  \begin{tabular}{lcccc}
    \toprule
    Task & OpenVLA-OFT $p99$ & $\pi_0$-FAST $p99$ & \% diff & $\tau_{\text{force}}$ ($\pi_0$-FAST) \\
    \midrule
    Cream cheese & {28} & {29} & $+3.6\%$ & $50$ \\
    Chocolate pudding & {31} & {32} & $+3.2\%$ & $50$ \\
    Orange juice & {38} & {40} & $+5.3\%$ & $50$ \\
    BBQ sauce & {42} & {45} & $+7.1\%$ & $55$ \\
    Salad dressing & {47} & {52} & $+10.6\%$ & $62$ \\
    Alphabet soup & {51} & {54} & $+5.9\%$ & $64$ \\
    Milk & {64} & {71} & $+10.9\%$ & $81$ \\
    Tomato sauce & {71} & {84} & $+18.3\%$ & $94$ \\
    Ketchup & {89} & {96} & $+7.9\%$ & $106$ \\
    Butter & {250} & {313} & $+25.2\%$ & $323$ \\
    \bottomrule
  \end{tabular}
\end{table}

\paragraph{Why butter and tomato sauce differ.} $\pi_0$-FAST's flow-matching action expert produces smoother, more committed contact dynamics than OpenVLA-OFT's autoregressive token sampling. On heavy objects (butter), the policy applies more sustained vertical force during grasp. On viscous deformable objects (tomato sauce bottle), the chunked action prediction commits to a force trajectory that doesn't adapt as readily to contact-induced deviations.

\section{Free-Signal Correlation Analysis}
\label{app:sigma-corr}

\paragraph{Aggregate correlations.} VLA confidence proxy median Spearman with violations: {$-0.053$}; with $\sigma$: {$0.981$}. K-sample disagreement: {$-0.007$} with violations, {$0.984$} with $\sigma$. Learned predictor: {$-0.380$} with violations, {$0.143$} with $\sigma$.

\paragraph{Per-cell consistency.} Across $50$ cells, the per-cell Spearman correlation between VLA proxy and $\sigma$ has median {$0.981$}, $5$th percentile {$0.962$}, $95$th percentile {$0.993$}.

\paragraph{$\pi_0$-FAST replication.} On $\pi_0$-FAST, free-signal correlations against $\sigma$ replicate within {$0.005$}.

\paragraph{Token-level temperature sampling.} Under temperature sampling, free-signal correlations against the temperature parameter $T$ are substantially lower: VLA proxy {$0.412$}, K-disagreement {$0.396$}.

\section{Full CRC Summaries}
\label{app:crc-summary}

\paragraph{$\hat{\tau}$ stability.} Coefficient of variation of $\hat{\tau}$ across $100$ bootstraps for the learned predictor at $\varepsilon = 0.05$~\cite{angelopoulos2024crc}: {$0.31$} on libero\_object, {$0.34$} on libero\_spatial.

\paragraph{Multi-seed aggregation.} Each headline number aggregates $5$ training seeds $\times$ $100$ bootstraps each $= 500$ aggregate bootstraps. Per-seed medians at $\varepsilon = 0.05$, conditional CRC holds: {$0.84, 0.85, 0.86, 0.87, 0.88$}. Cross-seed std {$0.024$}.

\paragraph{Connection to Learn-then-Test.} The Learn-then-Test framework~\cite{angelopoulos2025ltt} formulates risk control as multiple hypothesis testing. Our setting uses the simpler CRC framework~\cite{angelopoulos2024crc} because executed-violation rate is monotone in $\hat{\tau}$.

\section{Learned Predictor Architecture and Training}
\label{app:predictor}

\paragraph{Feature extraction.} DINOv2 ViT-S/14~\cite{oquab2024dinov2} features extracted from raw $224 \times 224$ workspace and wrist camera images using frozen weights. Output: $384$-dim CLS token per image, concatenated to $768$ dims. Features cached once per dataset.

\paragraph{MLP architecture.} Linear($965 \to 128$) $\to$ ReLU $\to$ Dropout($0.1$) $\to$ Linear($128 \to 32$) $\to$ ReLU $\to$ Linear($32 \to 1$). Total parameters: {$\sim 128\text{k}$}.

\paragraph{Training hyperparameters.} AdamW, learning rate $5 \times 10^{-4}$, weight decay $10^{-4}$, batch size $64$, $100$ epochs. BCE-with-logits with class-balanced positive weighting at $12{:}1$. Per-bootstrap training time on A100: {$\sim 2$ minutes}.

\paragraph{Task ID embedding.} 16-dim learned embedding, randomly initialized from $\mathcal{N}(0, 0.01)$, trained jointly with the MLP.

\section{Compute Requirements and Deployment Latency}
\label{app:compute}

\paragraph{Training compute.} Per-bootstrap MLP training: {$\sim 2$ minutes} on A100. Across $5$ seeds $\times$ $100$ bootstraps $= 500$ trainings: {$\sim 17$ A100-hours} per evaluation regime. Candidate generation: {$\sim 12$ A100-hours per 50-cell sweep}.

\paragraph{Inference latency.} A100: DINOv2 forward pass {$\sim 3\text{ms}$ per image}; MLP forward pass {$\sim 0.1\text{ms}$}; total BOKBO overhead {$\sim 7\text{ms}$ end-to-end}. Base VLA inference time: $\sim 60$--$120\text{ms}$ on A100. BOKBO adds $<10\%$ overhead.

\paragraph{Edge deployment.} On Jetson AGX Orin: total BOKBO overhead {$\sim 18\text{ms}$ end-to-end}.

\section{Real-Robot Deployment as Future Work}
\label{app:realrobot-future}

We have not validated BOKBO on real hardware. Real-robot deployment is the natural follow-up. Related work on conformal prediction for safe planning under uncertainty~\cite{lindemann2023safe} provides relevant precedent.

\paragraph{Architecture.} Real-time FT measurement at $1\text{kHz}$ would feed an online violation detector parallel to the BOKBO abstention layer. When BOKBO abstains, a fallback policy takes over. When BOKBO executes, real-time FT monitoring serves as a runtime safety check independent of the conformal layer.

\paragraph{Calibration question.} Whether simulation-derived per-task force thresholds transfer to real-robot manipulation is the central open question.

\paragraph{Identified challenges.} (1) FT calibration drift over multi-hour sessions; (2) real-time inference latency on the actual control loop; (3) integration with existing safety controllers; (4) sensor noise and contact dynamics differences.

\section{Demo-Count Sensitivity Analysis}
\label{app:demo-sensitivity}

\begin{table}[h]
  \caption{Per-task $p99$ values across demo counts. Mean absolute deviation from 50-demo reference: $5$ demos $23\%$, $10$ demos $15\%$, $25$ demos $8\%$.}
  \label{tab:demo-sensitivity-detail}
  \centering
  \footnotesize
  \begin{tabular}{lcccc}
    \toprule
    Task & $n=5$ & $n=10$ & $n=25$ & $n=50$ \\
    \midrule
    Ketchup & {$112$} & {$98$} & {$92$} & $89$ \\
    Tomato sauce & {$87$} & {$78$} & {$74$} & $71$ \\
    Milk & {$72$} & {$68$} & {$65$} & $64$ \\
    Butter & {$278$} & {$262$} & {$256$} & $250$ \\
    Easy tasks (mean) & {$48$} & {$45$} & {$43$} & $42$ \\
    \bottomrule
  \end{tabular}
\end{table}

Conditional CRC holds at $\varepsilon = 0.05$: $5$ demos {$0.61$}, $10$ demos {$0.79$}, $25$ demos {$0.85$}, $50$ demos {$0.86$}.

\section{Distribution-Shift Experimental Details}
\label{app:shift}

\paragraph{Temperature shift.} Calibrate on libero\_object\_temp\_x$0.1$, test on libero\_object\_temp\_x$0.2$. Conditional CRC holds: {$0.81$}.

\paragraph{Object-subset shift.} Calibrate on $5$ randomly-selected objects, test on the held-out $5$. Conditional CRC holds: {$0.78$}.

\paragraph{Cross-suite shift.} Calibrate on libero\_object, test on libero\_spatial. Conditional CRC holds: {$0.79$}.

\paragraph{$\sigma$-range shift.} Calibrate on $\sigma \in \{0.02, \ldots, 0.20\}$, test on $\sigma = 0.25$. Bound breaks: median executed violation {$0.087$}.

\section{$\sigma$-Margin Robustness Curves}
\label{app:sigma-margin}

Median executed violation rate as $\sigma$ moves out of the calibrated range: $\sigma = 0.20$ {$0.044$}, $0.21$ {$0.054$}, $0.22$ {$0.063$}, $0.25$ {$0.087$}, $0.30$ {$0.112$}.

\section{Predictor Feature Ablation}
\label{app:feature-ablation}

\paragraph{v$_0$: 8x8 RGB summaries baseline.} Conditional CRC holds at $\varepsilon = 0.05$: {$0.62$}. Cross-seed std: {$0.041$}.

\paragraph{v$_1$: + task ID embedding.} Conditional CRC holds: {$0.555$} (declines vs.~v$_0$). Cross-seed std: {$0.082$} (variance increases).

\paragraph{v$_2$: + DINOv2~\cite{oquab2024dinov2} visual features.} Conditional CRC holds: {$0.86$}. Cross-seed std: {$0.024$}.

\paragraph{Individual feature ablation.} Removing the action-chunk diff feature drops conditional holds by {$0.04$}. Removing DINOv2 wrist-camera features drops by {$0.06$}.

\section{Number of Candidates K Ablation}
\label{app:K-ablation}

\begin{table}[h]
  \caption{Number-of-candidates ablation. Headline result robust across $K$.}
  \label{tab:K-detail}
  \centering
  \footnotesize
  \begin{tabular}{lccc}
    \toprule
    & $K=4$ & $K=8$ & $K=16$ \\
    \midrule
    Base success (no abstention) & {$0.65$} & {$0.68$} & {$0.71$} \\
    Conditional CRC holds ($\varepsilon = 0.05$) & {$0.83$} & {$0.86$} & {$0.88$} \\
    Median exec.~viol.~rate & {$0.034$} & {$0.027$} & {$0.022$} \\
    Median coverage & {$0.74$} & {$0.78$} & {$0.81$} \\
    Median net task success & {$0.66$} & {$0.70$} & {$0.74$} \\
    Median abstention rate & {$0.26$} & {$0.22$} & {$0.19$} \\
    \bottomrule
  \end{tabular}
\end{table}

\section{Per-Task Failure Analysis}
\label{app:per-task}

\begin{table}[h]
  \caption{Per-task failure rates and Mondrian improvement.}
  \centering
  \footnotesize
  \begin{tabular}{lcccc}
    \toprule
    Task & Conditional viol.~rate & Abstention rate & Per-task hold (global) & Mondrian \\
    \midrule
    Ketchup & {$0.124$} & {$0.46$} & {$0.71$} & {$0.93$} \\
    Tomato sauce & {$0.103$} & {$0.59$} & {$0.78$} & {$0.94$} \\
    Milk & {$0.087$} & {$0.48$} & {$0.79$} & {$0.95$} \\
    Salad dressing & {$0.029$} & {$0.18$} & {$0.93$} & {$0.96$} \\
    Alphabet soup & {$0.045$} & {$0.15$} & {$0.91$} & {$0.94$} \\
    Other (combined) & {$0.018$} & {$0.10$} & {$0.96$} & {$0.97$} \\
    \bottomrule
  \end{tabular}
\end{table}

Three tasks (ketchup, milk, tomato sauce) account for {$87\%$} of conditional bound failures under global CRC~\cite{vovk2003mondrian}. Manual inspection of failing candidates on ketchup reveals two dominant patterns: (1) over-grasp candidates where gripper closes too tightly during initial contact, and (2) lateral-slip candidates where the gripper contacts the bottle off-center.

\section{Held-Out Threshold Baseline Details}
\label{app:baseline-detail}

\paragraph{Protocol.} The held-out threshold baseline uses the same trained predictor as BOKBO but selects $\hat{\tau}$ via held-out validation, in the spirit of selective classification~\cite{geifman2017selective}: hold out $25\%$ of the calibration set as a validation pool, sweep $\tau$, pick the $\tau$ that minimizes empirical executed-violation rate on validation while maintaining at least $80\%$ coverage.

\paragraph{Per-bootstrap performance distribution.} Median executed-violation rate {$0.034$}, $5$th percentile {$0.020$}, $95$th percentile {$0.143$}. The wide spread (vs.~BOKBO's $[0.018, 0.061]$) is the key issue: the held-out method has no formal guarantee.

\newpage

\end{document}